\begin{document}

\newtheorem{theorem}{Theorem}[section]
\newtheorem{lemma}[theorem]{Lemma}
\newtheorem{proposition}[theorem]{Proposition}
\newtheorem{corollary}[theorem]{Corollary}

\newenvironment{proof}[1][Proof]{\begin{trivlist}
\item[\hskip \labelsep {\bfseries #1}]}{\end{trivlist}}
\newenvironment{definition}[1][Definition]{\begin{trivlist}
\item[\hskip \labelsep {\bfseries #1}]}{\end{trivlist}}
\newenvironment{example}[1][Example]{\begin{trivlist}
\item[\hskip \labelsep {\bfseries #1}]}{\end{trivlist}}
\newenvironment{remark}[1][Remark]{\begin{trivlist}
\item[\hskip \labelsep {\bfseries #1}]}{\end{trivlist}}

\newcommand{\qed}{\nobreak \ifvmode \relax \else
      \ifdim\lastskip<1.5em \hskip-\lastskip
      \hskip1.5em plus0em minus0.5em \fi \nobreak
      \vrule height0.75em width0.5em depth0.25em\fi}

\title{The placement of the head that minimizes online memory: a complex systems approach}

\author{Ramon Ferrer-i-Cancho}

\maketitle

\begin{abstract}
It is well known that the length of a syntactic dependency determines its online memory cost. Thus, the problem of the placement of a head and its dependents (complements or modifiers) that minimizes online memory is equivalent to the problem of the minimum linear arrangement of a star tree. However, how that length is translated into cognitive cost is not known. This study shows that the online memory cost is minimized when the head is placed at the center, regardless of the function that transforms length into cost, provided only that this function is strictly monotonically increasing. Online memory defines a quasi-convex adaptive landscape with a single central minimum if the number of elements is odd and two central minima if that number is even. We discuss various aspects of the dynamics of word order of subject (S), verb (V) and object (O) from a complex systems perspective and suggest that word orders tend to evolve by swapping adjacent constituents from an initial or early SOV configuration that is attracted towards a central word order by online memory minimization. We also suggest that the stability of SVO is due to at least two factors, the quasi-convex shape of the adaptive landscape in the online memory dimension and online memory adaptations that avoid regression to SOV. Although OVS is also optimal for placing the verb at the center, its low frequency is explained by its long  distance to the seminal SOV in the permutation space.
\end{abstract}

{\bf Keywords: word order, head placement, adaptive landscape, neutrality, language dynamics, language evolution }


\section{Introduction}

Word order is a complex phenomenon with different forces pulling in different directions \cite{Langus2010a,Hawkins2004a}. One of these forces is the minimization of the length of a syntactic dependency between a head and its dependents (modifiers or complements). The constraints on word order imposed by this force explain the interpretation of ambiguous sentences \cite{Gibson1998a}, sentence comprehension difficulties \cite{Gibson1998a}, sentence acceptability \cite{Morrill2000a}, word order preferences \cite{Hawkins1994}, the exponential decay of the probability of syntactic dependency lengths \cite{Ferrer2004b}, Greenbergian universals \cite{Ferrer2008e}, the low frequency of crossings between syntactic dependencies \cite{Ferrer2006d} and the tendency of dependencies not to cover the root of a syntactic dependency structure \cite{Ferrer2008e}. Various statistical properties of syntactic dependencies, such as the sublinear scaling of the mean dependency length as a function of sentence length \cite{Ferrer2004b,Ferrer2013c} and below-chance global metrics of dependency lengths \cite{Liu2008a,Gildea2010a} are consistent with a principle of dependency length minimization.
The aim of the present article is to illuminate the complex phenomenon of word order from the perspective of this force, hoping that progress in one dimension helps to solve the whole puzzle.

\begin{tabular}{c}
\\
{\bf Table 1 near here} \\
\\
\end{tabular}

Various sources suggest that there is a tendency for the verb (V) to be placed between the subject (S) and the object (O). Table \ref{word_order_statistics_table} indicates that among all the possible orders of S, V and O, those with V at the center (SVO and OVS) represent $42 \%$ of the languages showing a dominant word order. Although languages with a V-final order are slightly more numerous, i.e. $48\%$, 
\begin{itemize}
\item
the number of languages with V at the center is larger than predicted by the null hypothesis that V can go anywhere \cite{Ferrer2008f}. Under this null hypothesis, the verb has a 1/3 probability of being placed first, second or third;
\item
the most frequent word order by far is SVO \cite{Benz2010a} if frequency is measured in number of speakers and not in number of languages as in Table \ref{word_order_statistics_table}. 
\end{itemize}
Although these statistical results may depend on the choice of the null hypothesis and the subjectivity or measurement error in assigning a dominant word order to a language with traditional methods, they suggest that central verb placement is an attractor of the dynamics of word order evolution and motivate a stronger mathematical approach. However, the high frequency of SVO could be simply an accident of history, e.g., the result of the higher diffusion of a certain culture through imperialism. Stronger support for the hypothesis of central verb placement as an attractor is provided by the direction of word order change. When it occurs, this change has been for the most part from SOV to SVO and, beyond that, from SVO to VSO/VOS with a subsequent reversion to SVO occurring occasionally (interestingly, reversion to SOV occurs only via diffusion) \cite{Gell-Mann2011a}. The idea that SOV evolves towards SVO but evolution in the opposite direction is by far less common is not new \cite{Newmeyer2000,Givon1979a}. 

Here we aim to shed light on the origins of the attraction of V towards the center as a particular case of the problem of arranging a head (e.g. a verb) and its complement(s) (a subject or an object) sequentially. Imagine that there is a head and $n$ dependents (modifiers or complements). For instance,
the English phrase ``a black cat'' is a case of a head, i.e. the noun ``cat'', and two modifiers, i.e. the determiner ``a'' and the adjective
``black''. The head and the modifiers/complements will be referred to as elements (there are
thus $n+1$ elements). Imagine
that the elements are produced sequentially. It will be shown that placing the head at the center minimizes the online memory cost and that dependency length minimization is a consequence of online memory minimization. In particular, this implies that placing the verb between the subject and the object is optimal. These insights challenge the claim that evidence of the selective advantage of SVO is missing \cite{Gell-Mann2011a}. The goal of this article is not to solve the puzzle of the ordering of S, V and O entirely, but rather to shed light on a specific question of word order dynamics: what force could be responsible for an SOV language becoming SVO? 
Similarly, why is the current number of SOV languages historically decreasing \cite{Newmeyer2000}?
Addressing the issue of why SOV is the most frequent word order in languages at present and apparently even more so in ancient times is beyond the scope of this article.

\section{The placement of heads that minimizes the online memory expenditure}

\label{memory_section}

Imagine that the positions of a head and its $n$ dependents in a sequence are specified using
natural numbers from $1$ to $n+1$ and that the position of the head is $l$
(thus $l \in [1,n+1]$). Following these conventions, the phrase ``a black
cat'' has ``a'' at position $1$, ``black'' at position $2$ and ``cat'' at
position $3$, and $n=2$. 
The distance between two elements is defined as the absolute value of the difference between element positions, i.e. the number of intermediate elements plus one \cite{Ferrer2004b}.
For instance, in the example above, "black" and "cat" are at distance one, whereas "a" and "cat" are at distance two.
Similarly, the length of a dependency is defined as the distance between the head and the dependent.
There is a long tradition in linguistics and closely related fields of studying the relationship  
between cognitive cost and the distance between syntactically related items
\cite{Hawkins1994,Gibson1998a,Morrill2000a,Grodner2004a,Liu2008a,Temperley2008a,Ferrer2008e}.
The distance between a head and its dependent can be seen as an estimate of the time that is needed to keep an open or unresolved head-dependent dependency in online memory \cite{Morrill2000a}. Accordingly, $g(d)$ is defined as the online memory cost of a dependency of length $d$ (length is measured in elements). For simplicity, we assume that the online memory cost of a dependency is based only on its length, and thus the only parameter of the online memory cost function is $d$. In particular, this implies the assumption that the cost of a dependency is not influenced by whether the head precedes or follows its dependent. We assume that $g$ is a strictly monotonically increasing function of $d \in [1,n]$. For instance, the identity function ($g(d)=d$) has been considered \cite{Ferrer2004b,Liu2008a}.
We will not work directly on distances but on their implied online memory cost. The total online memory cost of the dependencies between a head placed at position $l$ ($1\leq l \leq n+1$) and its $n$ dependents may be defined as the sum of the cost of dependencies from dependents to the left of the head plus the sum of the cost of dependencies from dependents to its right, i.e.
\begin{eqnarray}
D_l & = & \sum_{i=1}^{l-1} g(|i-l|) + \sum_{i=l+1}^{n+1} g(|i-l|) \notag \\
    & = & \sum_{i=1}^{l-1} g(l-i) + \sum_{i=l+1}^{n+1} g(i-l), \label{raw_total_cost_equation}
\end{eqnarray}
where $|...|$ is the absolute value operator. Eq. \ref{raw_total_cost_equation} can be rewritten as 
\begin{equation}
D_l = \sum_{d=1}^{l-1} g(d) + \sum_{d=1}^{n+1-l} g(d). 
\label{total_cost_equation}
\end{equation}
Although $g(0)=0$ is a reasonable assumption, notice that the definition of $D_l$ in Eq. \ref{total_cost_equation} does not need it because $g(d)$ is always invoked satisfying $1 \leq d \leq n$. Assuming $g(0)=0$ is neither necessary for the arguments below.
When $g(d)=d$, Eq. \ref{total_cost_equation} yields a polynomial of the second degree, i.e. 
\begin{eqnarray}
D_l & = & \sum_{d=1}^{l-1} d + \sum_{d=1}^{n+1-l} d \notag \\
    & = & l(l-1)/2 + (n+2-l)(n+1-l)/2 \notag \\ 
    & = & l^2-(n+2)l + \frac{1}{2}(n+1)(n+2), \label{total_cost_identity_equation}
\end{eqnarray}
after some algebra. For instance, when $n=3$, the head has four possible placements, cf. Fig. \ref{linear_arrangement_figure}. When $n=3$ and $g$ is the identity function, $D_l$ is maximum when the head is placed in the extremes and minimum when it is placed in one of the two middle positions (Eq. \ref{total_cost_identity_equation} gives $D_l=6$ for $l=1$ and $l=4$ and $D_l=4$ otherwise). 

\begin{tabular}{c}
\\
{\bf Fig. 1 near here} \\
\\
\end{tabular}

From a theoretical perspective, the problem of the placement of the head that minimizes $D_l$ is a particular case of the minimum linear arrangement problem for a tree with $g(d) = d$ \cite{Chung1984,Baronchelli2013a}. In our case, our tree is a star tree of $N = n + 1$ vertices, with the head being the hub of the star tree and the edges the syntactic dependencies between the governor and its dependents. Fig. \ref{linear_arrangement_figure} shows various linear arrangements of star trees. Star trees have maximum degree variance and their linear arrangement cannot have crossings \cite{Ferrer2013b}.
Assuming that $g(d) = d$, it has been shown that \cite{Ferrer2013b}
\begin{itemize} 
\item
Star trees reach the maximum sum of dependency lengths ($D_l$) that a non-crossing tree can reach when the hub is placed first ($l=1$) or last ($l=N$), i.e. 
  \begin{equation}
  D_l = \frac{N(N-1)}{2}.
  \end{equation} 
\item
In a star tree, $D_l$ is minimized when the hub is placed at the center. 
If $N$ is odd, then $D_l$ is minimized by $l = (N + 1)/2$ with 
   \begin{equation} 
   D_l = \frac{(N-1)(N+1)}{4}.
   \end{equation} 
If $N$ is even, then $D_l$ is minimized by either $l = N/2$ or $l=N/2 + 1$ with 
   \begin{equation}
   D_l = \frac{N^2}{4}.
   \end{equation}
\end{itemize}
Here we aim to go beyond the assumption of $g(d) = d$, which is common in research on the minimum linear arrangement problem in computer science \cite{Chung1984,Diaz2002} and linguistics from a mathematical \cite{Ferrer2004b,Ferrer2006d,Ferrer2008e,Ferrer2013b} or statistical perspective  \cite{Ferrer2004b,Liu2008a,Temperley2008a,Gildea2010a}. Instead, we show that a similar result holds in the more general case that $g(d)$ is strictly monotonically increasing. 
In her milestone article on optimal linear arrangement of trees, F. Chung proposed to consider $g(d)=d^{\gamma}$ with $\gamma$ being a fixed power. Here we consider the case that $g$ is a strictly monotonically increasing function of $d$, which covers the case $\gamma>0$ on star trees.

A sequence of $n + 1$ elements (with $n \geq 1$) has a single central position at $l^* = \lceil (n+1)/2 \rceil$ if $n$ is even and two central positions, one at $l^*$ and another at $l^* + 1$ if $n$ is odd ($\lceil x \rceil$ is the smallest integer not less than $x$). 
Thus $l^*$ is the only central position if $n$ is odd and the first central position if $n$ is even. If $n$ is even, this reduces to $l^* = n/2 + 1$ while if $n$ is odd, this reduces to $l^* = (n+1)/2$.
Hereafter we assume that $n \geq 2$, since there are no dependents when $n = 0$ and the placement of the head does not matter when $n = 1$ because $D_1 = D_2$. The following theorem states that the online memory cost is maximum when the head is placed at the extremes of the sequence and minimum at the center for any strictly monotonically increasing online memory cost function (some intuition about the result presented below can be obtained assuming $g(d) = d$ \cite{Ferrer2008e,Ferrer2013b}).

\begin{theorem}[Online memory cost of the dependencies]
\label{memory_theorem}
Let $l^*=\lceil (n+1)/2 \rceil$ be a central position of a sequence of $n+1$ elements. If it is assumed that there is a head, $n\geq 2$ dependents, and $g$ is a strictly monotonically increasing function of $d$ in $[1,n]$, then $D_l$, the total cost of the dependencies between the head placed at position
$l \in [1,n+1] \subset \mathbb{N}$ and its $n$ dependents, has
\begin{itemize}
\item
a minimum for $l=l^*$ if $n$ is even;
\item
two minima for $l=l^*$ and $l=l^*+1$ if $n$ is odd;
\item
two identical maxima for $l=1$ and $l=n+1$.
\end{itemize}
Additionally, $D_l$ is
\begin{itemize}
\item
strictly monotonically decreasing for $l\in [1,l^*]$ and strictly monotonically increasing for $l\in [l^*,n+1]$ when $n$ is even;
\item
strictly monotonically decreasing for $l\in [1,l^*]$ and strictly monotonically increasing for $l\in [l^*+1,n+1]$ when $n$ is odd.
\end{itemize}
\end{theorem}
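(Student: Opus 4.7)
My plan is to reduce the theorem to an analysis of the first difference $\Delta_l := D_{l+1} - D_l$ for $l \in \{1,\ldots,n\}$. Starting from Eq.~\ref{total_cost_equation}, the two partial sums telescope cleanly: incrementing $l$ appends $g(l)$ to the left sum and removes $g(n+1-l)$ from the right sum, so
\begin{equation*}
\Delta_l \;=\; g(l) - g(n+1-l).
\end{equation*}
Because $g$ is strictly monotonically increasing on $[1,n]$, the sign of $\Delta_l$ coincides with the sign of $l-(n+1-l)=2l-(n+1)$, and this single identity is all the machinery the proof really needs.

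The second step is a parity split. If $n$ is even, then $2l-(n+1)$ never vanishes on integers; it is negative for $l \le n/2 = l^*-1$ and positive for $l \ge l^*$. This gives strict decrease of $D_l$ on $[1,l^*]$, strict increase on $[l^*,n+1]$, and a unique minimum at $l^*$. If $n$ is odd, then $\Delta_{l^*}=0$ (so $D_{l^*}=D_{l^*+1}$), $\Delta_l<0$ for $l<l^*$, and $\Delta_l>0$ for $l>l^*$; this yields strict decrease on $[1,l^*]$, strict increase on $[l^*+1,n+1]$, and two minima at $l^*$ and $l^*+1$, matching the two parity clauses of the statement.

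For the two maxima, strict monotonicity on the two intervals forces every maximum to lie at an endpoint, so the only candidates are $l=1$ and $l=n+1$. Equality is then immediate from the symmetry $D_l = D_{n+2-l}$, which one reads off Eq.~\ref{total_cost_equation}: the substitution $l \mapsto n+2-l$ merely swaps the two partial sums.

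The whole argument hinges on the telescoping identity $\Delta_l = g(l)-g(n+1-l)$, so there is no serious obstacle. The only care needed is bookkeeping around the parity of $n$: the equation $2l = n+1$ has an integer solution exactly when $n$ is odd, and that solution is precisely $l^*$. This is what distinguishes the unique-minimum case (even $n$) from the doubled-minimum case (odd $n$), and it is the one place where one must check the edge cases $l=1$, $l=l^*$, and $l=n+1$ against the closed intervals written in the statement.
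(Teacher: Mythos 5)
Your proposal is correct and follows essentially the same route as the paper: it computes the discrete difference $\Delta_l = D_{l+1}-D_l = g(l)-g(n+1-l)$, reads off its sign from the strict monotonicity of $g$ via the comparison of $l$ with $(n+1)/2$, splits by the parity of $n$ to locate the minima and the monotonicity intervals, and uses the symmetry $D_l = D_{n+2-l}$ to identify the two equal maxima at the endpoints. No gaps.
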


\begin{proof} 
The argument is based on the discrete derivative of $D_l$, i.e. 
\begin{eqnarray}
\Delta_l & = & \frac{D_{l+1} - D_l}{(l+1)-l} \notag \\
         & = & D_{l+1} - D_l.
\label{discrete_derivative}
\end{eqnarray}
Applying the definition of $D_l$ in Eq. \ref{total_cost_equation} to Eq. \ref{discrete_derivative} yields
\begin{eqnarray}
\Delta_l & = & \sum_{d=1}^l g(d) - \sum_{d=1}^{n-l} g(d) - \sum_{d=1}^{l-1} g(d) + \sum_{d=1}^{n + 1 -l} g(d) \notag \\
         & = & g(l) - g(n + 1 - l) \label{delta_equation} 
\end{eqnarray}
for $l \in [1,n]$ (and thus notice that $g(d)$ in Eq. \ref{delta_equation} is only applied to values of $d$ within $[1,n]$, although one could have $n + 1 - l = 0$ or $l = n+1$ {\em a priori}).
Knowing that $g$ is a strictly monotonically increasing function of $d \in [1,n]$, Eq. \ref{delta_equation}, gives
\begin{itemize}
\item
$\Delta_l < 0$ ($D_l$ decreases) iff $l < n + 1 - l$
\item
$\Delta_l > 0$ ($D_l$ increases) iff $l > n + 1 - l$
\item
$\Delta_l = 0$ ($D_l$ is constant) iff $l = n + 1 - l$.
\end{itemize}
Let us define a central position of the sequence as
\begin{equation}
\lambda = (n+1)/2, 
\end{equation}
then 
\begin{enumerate}
\item[a)]
$\Delta_l < 0$ iff $l < \lambda$. Thus, the existence of a natural $l$ within the domain of $\Delta$ (which is $[1,n]$) such that $\Delta_l < 0$ requires that $\lambda$ remains above the smallest possible value of $l$, which is $1$. The condition $\lambda > 1$ yields $n > 1$, which coincides with the assumption of $n \geq 2$ of the theorem. 
\item[b)]
$\Delta_l > 0$ iff $l > \lambda$. Thus, the existence of a natural $l$ within the domain of $\Delta$ such that $\Delta_l > 0$ requires that $\lambda$ remains below  the largest possible value of $l$ within the domain of $\Delta$, which is $n$. The condition yields $n > 1$ again. 
\item[c)]
$\Delta_l = 0$ iff $l = \lambda$. Thus, the existence of a natural $l$ such that $\Delta_l = 0$ requires that $\lambda$ is also a natural number, which only happens if $n$ is odd.
\end{enumerate}
Now the goal is to determine the interval $[l_{min},l_{max}]$ where $D_l$ is strictly monotonically decreasing or strictly monotonically increasing. $l_{min}$ and $l_{max}$ must be natural numbers. Notice that $l^* =  \lceil \lambda \rceil$.
When $n$ is even, $\lambda$ is not natural and $D_l$ is strictly monotonically decreasing for $l\in [1,l^*]$ according to a) and strictly monotonically increasing for $l\in [l^*,n+1]$ according to b). Therefore, $D_l$ has a single minimum at $l=l^*$.
When $n$ is odd, this is straightforward because $\lambda$ is natural, $\lambda = l^*$ and then  
$D_l$ is strictly monotonically decreasing for $l\in [1,l^*]$ according to a) and strictly monotonically increasing for $l\in [l^*+1,n+1]$ according to b).
Therefore, recalling c), i.e. $\Delta_l = 0$, it follows that $D_l$ has two minima, one for $l=l^*$ and the other for $l=l^*+1$.

Concerning the maxima, notice that $D_l$ is a symmetric function of $l$, i.e. $D_l=D_{n+2-l}$, by its definition in Eq. \ref{total_cost_equation}.
According to the shrinking and growth behavior of $D_l$ described above, $D_l$ has a maximum for $l=1$ and another for $l=n+1$. These two maxima are identical due to the symmetry of $D_l$. \qed      
\end{proof}

Theorem \ref{memory_theorem} indicates that $-D_l$ is a unimodal function if $n$ is even \cite[p. 63]{Avriel1988a} but almost unimodal if $n$ is odd, as the two modes are adjacent in that case. It will be shown next that $D_l$ is a quasi-convex function. 
Quasi-convexity is a generalization of convexity \cite{Avriel1988a,Greenberg1971a}. Quasi-convex functions can be optimized within a reasonable computation cost \cite{Kiwiel2001a}.
\begin{corollary}[Quasi-convexity of the online memory cost]
\label{quasiconvexity_theorem}
If there are $n \geq 2$ dependents and $g$ is a strictly monotonically increasing function of $d$, then $D_l$ is quasi-convex within $[1,n+1]$, i.e.  for any $l_1$, $l_2$, and $l_3$ such that $1 \leq l_1 \leq l_2 \leq l_3 \leq n + 1$ one has that 
\begin{equation}
D_{l_2} \leq max(D_{l_1},D_{l_3}). \label{quasiconvexity_equation}
\end{equation}
\end{corollary}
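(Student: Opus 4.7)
The plan is to derive the corollary as an essentially immediate consequence of Theorem~\ref{memory_theorem}. That theorem already establishes that $D_l$ is strictly monotonically decreasing on $[1,l^*]$ and strictly monotonically increasing either on $[l^*,n+1]$ (when $n$ is even) or on $[l^*+1,n+1]$ (when $n$ is odd). A function with this ``V-shape''---decreasing on an initial segment and increasing on the complementary final segment---is automatically quasi-convex, because for any $l_2$ sandwiched between $l_1$ and $l_3$ at least one of the two monotone branches contains $l_2$ together with one of the two endpoints.

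Concretely, given $1 \leq l_1 \leq l_2 \leq l_3 \leq n+1$, I would split on whether $l_2$ lies in the decreasing branch or the increasing branch. If $l_2 \leq l^*$, then $l_1 \leq l_2 \leq l^*$ sits inside the interval of strict decrease, so Theorem~\ref{memory_theorem} yields $D_{l_2} \leq D_{l_1} \leq \max(D_{l_1},D_{l_3})$. Otherwise $l_2 \geq l^*$ when $n$ is even, or $l_2 \geq l^*+1$ when $n$ is odd; in either situation $l_2 \leq l_3 \leq n+1$ sits inside the interval of strict increase, giving $D_{l_2} \leq D_{l_3} \leq \max(D_{l_1},D_{l_3})$. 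These two subcases jointly exhaust $[1,n+1]$, so inequality~(\ref{quasiconvexity_equation}) holds in full generality.

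The only subtlety worth flagging concerns the odd case, where $l^*$ and $l^*+1$ are two adjacent global minima with $D_{l^*}=D_{l^*+1}$; one has to commit to a convention for which branch contains the boundary value $l_2=l^*$, but since this common value is the global minimum of $D_l$ the conclusion $D_{l_2} \leq \max(D_{l_1},D_{l_3})$ is clear no matter how the assignment is made. I do not anticipate a genuine obstacle: Theorem~\ref{memory_theorem} has done all of the analytical work, and the corollary is essentially the standard observation that any unimodal function on a totally ordered finite domain is quasi-convex.
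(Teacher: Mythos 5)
Your proposal is correct and follows essentially the same route as the paper: both reduce the quasi-convexity inequality to the disjunction $D_{l_2} \leq D_{l_1}$ or $D_{l_2} \leq D_{l_3}$ and settle it by splitting on whether $l_2$ lies in the decreasing branch ($l_2 \leq l^*$) or the increasing branch, using the monotonicity statements of Theorem~\ref{memory_theorem}. Your extra remark about the boundary convention in the odd case is a harmless refinement of the same argument.
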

\begin{proof}
The condition defined in Eq. \ref{quasiconvexity_equation} is equivalent to (a) $D_{l_2} \leq D_{l_1}$ or (b) $D_{l_2} \leq D_{l_3}$. If $l_2 \leq l^*$, Theorem \ref{memory_theorem} gives (a) while if $l_2 \geq l^*$, Theorem \ref{memory_theorem} gives (b).   
\qed
\end{proof}



\section{Discussion}

It has been shown that placing a head at the center minimizes the online memory expenditure. If the number of elements is odd, there is a single minimum, while, if the number of elements is even, there are two central minima. There are two novelties in our analysis. First, our approach abstracts away from the particular form of the function that translates the distance between a head and its complements into a cognitive cost. The point is subtle: even if one considers that online memory cost originates from the time that is needed to keep an open or unresolved dependency between a head and a dependent in online memory \cite{Morrill2000a}, it is still not known how this time translates into an energy cost for the brain, and if this final translation depends on the language. Second, the form of the adaptive landscape \cite{Wright1932a} defined by online memory cost has been unraveled: 
the landscape is quasi-convex, as illustrated by the example in Fig. \ref{energy_landscape_figure}. In this figure, it has been assumed that the length of a dependency and its cost are the same. Interestingly, Theorem \ref{memory_theorem} indicates that Fig.  \ref{energy_landscape_figure} is representative of the form of the landscape in spite of illustrating only a particular online memory cost function, because the landscape remains quasi-convex for any strictly monotonically increasing positive function that maps length onto cost. Therefore, the landscape depicted in Fig. \ref{energy_landscape_figure}
would be the kind of landscape in which word order dynamics operates on the online-memory dimension. 
The dynamics of the ordering of S, V and O would have at least three fundamental ingredients:  
\begin{enumerate}
\item
SOV as an initial or early word order configuration \cite{Goldin-Meadow2008a,Sandler2005a,Newmeyer2000},
\item
An online memory landscape that would drag the verb from the final position found in SOV towards the center, eventually leading to the SVO order,
\item
Adaptations to increase the stability of a given order (e.g., adaptations in SVO that prevent regression to SOV).
\end{enumerate}

\begin{tabular}{c}
\\
{\bf Fig. 2 near here} \\
\\
\end{tabular}

Theorem \ref{memory_theorem} sheds new light on the transition from SOV to SVO \cite{Newmeyer2000,Givon1979a}, which is a transition from maximum to minimum online memory cost. 
Future work should consider the inclusion of more dimensions, such as other sources of cognitive cost beyond dependency length. An important issue for further research is to determine whether verb-last or verb-central are stable or unstable attractors of word order evolution and what role factors other than dependency length play in determining such stability or instability. 

The return to SVO witnessed occasionally during word order evolution and the difficulty of reversion to SOV except via diffusion \cite{Gell-Mann2011a} suggests some degree of stability for central verbs. The shape of the adaptive landscape of online memory adds new theoretical support for the stability of central verbs: a language that tends to place the verb at the center will receive an increasing penalty as the verb is displaced to the beginning or end due to the quasi-convexity of that shape.

If there is a force that explains why SOV is initially selected, why do SVO languages not return to SOV easily because of that force?
A possible explanation is not only the attraction for the verb at the center due to online memory minimization,
 but also the possibility that evolutionary successful SVO orders may have incorporated further adaptations that hinder regression to SOV (see Appendix). Indeed, stronger theoretical support for the stability of SVO comes from mathematical word order theory, which addresses the issue of the optimal placement of dependents relative to their heads within S, V and O according to online memory minimization \cite{Ferrer2008e}. 
For a language that tends to put the verb last, e.g., an SOV language, the optimal solution in terms of online memory minimization for top-level dependencies is placing dependents before the nominal head whereas by symmetry, for a language that tends to put the verb first, the optimal solution is placing the dependents after the nominal head (Appendix). Interestingly, in the case of verb-central languages, whether dependents are placed to the left or to the right, is almost irrelevant in terms of online memory minimization of top-level dependencies (Appendix). From an evolutionary perspective, a change in the relative placement of dependents of nominal heads is practically neutral for SVO but not for SOV. If one considers that orders are competing and adapting to survive \cite{Ferrer2008e,Gong2009a}, it is not surprising that stable SVO languages put adjectives after the head \cite{Ferrer2008e}, as SOV prefers the opposite, namely that dependents precede the nominal head. This preference for noun-adjective order in SVO languages may be viewed as an obstacle to regression to SOV order. Besides word order, there might be other factors impeding a return to SOV. One possibility is case marking, which facilitates the learning of SOV structures \cite{Lupyan2002a}. Thus, regression to SOV could be harder from SVO languages lacking case marking.

We hypothesize that online memory minimization is a universal principle of language. It is important not to be side-tracked by seemingly contradictory evidence. The fact that a certain language does not show SVO as dominant word order does not mean that language does not suffer pressure for online memory minimization: 
\begin{itemize}
\item
It is well known that SVO is an alternative word order in many languages where
SVO is not the dominant word order \cite{Greenberg1963a}. This has a simple explanation. Placing the verb at the center is optimal in terms of online memory minimization; placing it somewhere else is not (Theorem \ref{memory_theorem}). Therefore, a language that does not have a dominant
word order with the verb at the center can compensate the cost of that non-optimal verb placement by adopting an alternative word order that is optimal such as SVO.
\item
The abundance of verb-final orders does not contradict the principle of online memory cost minimization. First, the diversity of orders of S, V and O \cite{Dryer2011a} could result from various principles acting simultaneously \cite{Langus2010a,Hawkins2004a}. Second, the relative position of adjectives and verbal auxiliaries in verb-final orders can be explained in terms of online memory cost minimization \cite{Ferrer2008e}. Thus, the fact that a language has SOV as the dominant order does not mean that online memory cost minimization is inactive. 
\end{itemize}
It is tempting to think that, if the hypothesis of attraction of the verb towards the center is correct, then OVS, the other ordering with V at the center, should have high frequency and the transition from SOV to OVS should be as frequent as the transition from SOV to SVO. However, the failure of these predictions on OVS can be understood easily with the help of the permutation space, which was originally introduced to explain the low frequency of OVS with regard to SVO \cite{Ferrer2008e}. If one assumes that word order evolves by swapping consecutive elements, the evolution from an initial or early word order SOV to SVO requires only one step: exchanging the positions of O and V. In contrast, the transition from SOV to OVS needs two steps: (1) exchanging the positions of S and O to obtain OSV and (2) exchanging S and V to finally obtain OVS (Fig. \ref{permutation_ring_figure} (a)).
Therefore, the verb-central word order that can be reached sooner from initial or early SOV is SVO. SVO is the easiest way of minimizing online memory from SOV by swapping adjacent constituents. While it has been argued that the disproportion between the abundance of SVO (488 languages) and OVS (11 languages) could be due to an arbitrary break of the symmetry between orders placing the verb at the center \cite{Ferrer2008e}, we argue here that this is more likely to be caused by the proximity of SOV, an attractor of word order evolution at early stages.
For simplicity, we have assumed that word order evolution proceeds by exchanging consecutive elements, but it could be argued that the exchange of distant elements has an important role in word order evolution. However, the latter is expected to be cognitively more expensive and therefore less likely (consider all the arguments supporting online memory minimization reviewed above). 

\begin{tabular}{c}
\\
{\bf Fig. 3 near here} \\
\\
\end{tabular}

The path of the evolution of word order, i.e. SOV, SVO and VSO/VOS \cite{Gell-Mann2011a} is consistent with a traversal of the permutation space characterized by the swapping of adjacent constituents. It could be argued that once SVO is reached, the emergence of its reverse, OVS should be easier. However, notice that OVS is the farthest order in the permutation ring depicted in Fig. \ref{permutation_ring_figure} (a): at least three permutations of adjacent constituents are needed to reach OVS from SVO. Interestingly, the frequency of languages showing a certain dominant word order X is perfectly correlated with the number of swaps needed to reach X from SOV (Fig. \ref{permutation_ring_figure} (b)): the number of languages always decreases as the number of swaps (of adjacent constituents) increases in a clockwise sense in the permutation ring of Fig. \ref{permutation_ring_figure} (a). Thus the Spearman rank correlation \cite{Spearman1904a} is $\rho = -1$ and the p-value of a two-sided test \cite{Conover1999a} is $2/6! \approx 0.0028$, as all the frequencies of dominant orders and the number of swaps do not have repeated values \cite{Ferrer2012a}.
We have used a Spearman rank correlation for its capacity to capture non-linear correlations \cite{Zou2003a}.
The power of the test would have dropped significantly if a Pearson correlation test had been used: 
with a Pearson correlation test, we would get $r = -0.89$ with a p-value of $0.016$. The reduced power of the Pearson correlation is consistent with the non-linear appearance of the number of languages as a function of the number of swaps (Fig. \ref{languages_versus_distance_figure}). The Pearson correlation is not able to capture the perfect non-linear correlation ($r > -1$  indicates a weaker correlation, compared to $\rho = -1$), and gives a p-value that is about 6 times larger than that of the Spearman correlation.

\begin{tabular}{c}
\\
{\bf Fig. 4 near here} \\
\\
\end{tabular}

Besides the shape of the permutation space and the clockwise sense, other factors could explain why OVS is rarely reached, such as a preference for initial S that is suggested by the very high frequency of SOV and SVO together (initial S is found in $88\%$ of languages showing a dominant word order \cite{Dryer2011a}). Another possible factor might be the preference for SO over OS \cite{Cysouw2008a}. However, this explanation would imply adopting an approach to word order that contrasts with the conceptual design of our complex systems approach. Assuming a combination of a preference for SO over OS and a preference for SV over VS (which implies a preference for S first) to explain the frequency of the dominant ordering of S, O and V in languages \cite{Cysouw2008a} is a reductionistic view: the order of the whole is believed to reduce to the relative ordering of the parts. This is problematic because the optimal linear arrangement of the pair S and V does not need to be the same as that of the triple formed by the S-V pair and O. For instance, in terms of online memory minimization of the pair S and V, the ordering (SV or VS) is irrelevant, but if O is also present then V must be placed at the center \cite{Ferrer2008e}. How our emergentist approach and the traditional reductionistic approach could be integrated with each other is the subject of future research.
 

Here we have considered the problem of the optimal linear arrangement of dependents with regard to their head, focusing on the particular case of the verb, subject and object. As our argument is abstract, one expects that it is also valid for other heads and their dependents. For instance, a challenge is that SVO languages tend to have the adjective after the noun \cite{Ferrer2008e}. 
The fact that no language consistently splits its noun phrases around a central, nominal pivot, with half of the modifiers to the left and half to the right, might be seen as evidence that online memory minimization has a very restricted scope. However, online memory minimization is still a fundamental principle even for nominal heads. The point is that the optimal placement of modifiers around a noun in terms of online memory minimization is (a) not independent from the dominant placement of the verb governing the noun in a word order $X$ (as we have discussed above), and (b) not necessarily independent from a competing word order $Y$. Consider that $X$ is SVO. Then the permutation ring in Fig. \ref{permutation_ring_figure} (a) indicates that $Y$ could be SOV, as it is easy to move from SVO to SOV. From the perspective of online memory minimization, SVO may put half of the modifiers to the left and half of the modifiers to the right of the noun; but as the placement of modifiers before the nomimal head is optimal in SOV, we have seen that SVO languages can maximize the cost for SOV by placing modifiers on the opposite side \cite{Ferrer2008e}. A non-reductionistic approach to word order needs to take into account both (a) the interaction between the placement of heads of different levels, and (b) the interaction between competing word orders at the same level.

A basin of attraction consists of an attractor and all the trajectories leading to this attractor \cite{Wuensche2000a,Riley2012a}.
Figure \ref{energy_landscape_figure} defines the trajectories and their impact on online memory cost when moving the head one position forward or backward, and thus defines the basin of attraction of word order evolution on the single dimension of online memory cost. 
Fig. \ref{permutation_ring_figure} (a) defines the trajectories that can be followed {\em a priori} by swapping consecutive constituents till SVO is reached. 
Looking at the neighbors of SOV in the permutation ring, one notices that SVO (488 languages) is much more frequent than OSV (4 languages). This suggests that the bulk of the trajectories of dominant word order evolution might be more accurately described by a directed graph version of Fig. \ref{permutation_ring_figure} (a) where the link direction is clockwise. Fig. \ref{permutation_ring_figure} (b) shows (from left to right) the linear undirected tree that results from a traversal of the ring of Fig. \ref{permutation_ring_figure} (a) starting from SOV, moving clockwise and stopping when all word orders have been visited. The low frequency of OSV languages and the correlation between word order frequency and distance to SOV provides further support for the idea that word order evolution is essentially unidirectional \cite{Gell-Mann2011a}, a feature that is reminiscent of the unidirectionality of grammaticalization processes \cite{Traugott1991}.
However, a directed graph with a ring backbone is still an incomplete draft of the possible trajectories under the influence of -- at least -- online memory minimization. Contingency (initial or early preference for SOV) would favor SVO over OVS. But SOV itself could be an attractor whose origin, from a dynamical and mathematical point of view, is still not well-understood.

In order to develop a deep understanding of language and word order in particular, it is important not to mistake a principle for an explanation.
The fact that online memory minimization does not explain the abundance of SOV order does not mean that it has nothing to do with why and how it appears, or why and how it is maintained. If a language leaves SOV, it is well known that SVO is a dominant direction of change; and although SVO can be abandoned towards VSO/VOS, reversion to SVO occurs occasionally. Furthermore, reversion to SOV is not expected except via diffusion, which is a secondary process in word order evolution \cite{Gell-Mann2011a}. It is perhaps here that the power of our mathematical results is revealed. Two languages could even have a different function for transforming the length of a dependency into a cognitive cost, but, provided that the transformation is monotonically increasing, both will at least be attracted towards central head placement. Furthermore, the form of the adaptive landscape would reinforce the attraction towards a central placement of the verb (Fig. \ref{energy_landscape_figure}). 
In this view, online memory minimization is neither regarded as the only principle nor as the most important one. Language design is a multiconstraint engineering problem \cite{Evans2009a} and, therefore, online memory minimization would be just one of the constraints to meet during word order evolution. Our analysis has simply focused on one dimension of the problem. 

A look through the eye of physics can help clarify our notion of principle. The force of gravity explains why objects fall, but when a rocket flies in the opposite direction of that force, one would not say that its movement constitutes an exception to 
gravity. The force is still acting and is involved in explaining, for instance, the amount of fuel
that is needed to fly in the opposite direction. Just like the falling of an object is a
manifestation of the force of gravity, central head placement or the movement of a head towards the center (perhaps not reaching the optimal center) are manifestations of a principle of online memory minimization. And, just like the force of gravity is still acting on a rocket flying in the opposite direction, one should
not conclude prematurely that online memory minimization is not acting upon a language when the head is not placed in the middle, i.e. the optimal position according to online memory minimization. Online memory minimization still determines the relative placement of adjectives and verbal auxiliaries in SOV languages \cite{Ferrer2008e}. On a larger scale, 
although Earth and Venus are very different planets, the force of gravity is valid in both
(indeed universal) and physicists only care about the variation in its magnitude. Similarly,
all languages can have online memory minimization in common, no matter how large the genetic, historical, typological or other differences are among the languages or among their speakers. We are challenging the claim that central verb placement does not confer any selective advantage to a language that adopts it \cite{Gell-Mann2011a}. In sum, we believe that a more physics-oriented point of view is needed for progress in our understanding of word order evolution. This viewpoint reveals how online memory minimization could underlie the transition from SOV to SVO \cite{Newmeyer2000,Givon1979a}.
However, understanding the tendency of subjects to precede objects \cite{Cysouw2008a} and other principles that can determine a different verb placements such as the verb-final placement found in the very frequent SOV languages, remain issues for future research. 

\section*{Acknowledgments}

An early version of the main of results of this article was presented in the Kickoff Meeting "Linguistic Networks" which was held in Bielefeld University (Germany) in June 5, 2009. We thank the participants, specially G. Heyer and A. Mehler for valuable discussions and S. Whichmann and B. Elvev{\aa}g for many indications to improve the current article. The version published in this journal has benefited enormously from the the deep and extensive comments of Phillip Alday and also the extremelly careful proofreading of Silke Lambert.
We are also grateful to S. Caldeira, M. Christiansen, F. Jaeger, E. Gibson, R. Levy, H. Liu, N. Moloney, G. Morrill, S. Piantadosi, F. Reali and O. Valent\'{i}n for helpful comments at various stages. This work was supported by the grant {\em Iniciaci\'o i reincorporaci\'o a la recerca} from the Universitat Polit\`ecnica de Catalunya, and the grants BASMATI (TIN2011-27479-C04-03) and OpenMT-2 (TIN2009-14675-C03) from the Spanish Ministry of Science and Innovation.

\bibliographystyle{apacite}
\bibliography{../biblio/complex,../biblio/rferrericancho,../biblio/cl,../biblio/ling,../biblio/cs,../biblio/maths} 

\section*{Appendix: optimal placement of dependents of nominal heads}


Assuming $g(d) = d$, a mathematical theory for the ordering of three top-level constituents, subject (S), object (O) and verb (V), was developed previously \cite{Ferrer2008e}. 
$|x|$ is defined as the length in words of constituent $x$ ($x \in \{$S, V, O$\}$). We assume that the constituents are not empty ($|O|,|S|,|V| \geq 1$) and their lengths are constant (word order variations do not alter $|O|$,$|S|$ or $|V|$).
$\delta_{x \sim x'}^y$ is defined as the length of the dependency between the heads of constituents $x$ and $x'$ in word order $y$ ($y \in \{SOV, SVO,...\}$).
$\delta^y = \delta_{V \sim S}^y + \delta_{V \sim O}^y$ is thus the sum of the length of the dependencies (in words) between the head of the verb and the subordinated heads of the subject and the object for a word order $y$. The total cost of a sentence consisting of $S$, $V$ and $O$ following an order $y$ is 
\begin{equation}
\Omega^{y} = \omega_S^{y} + \omega_V^{y} + \omega_O^{y} + \delta^{y},
\end{equation}
where $\omega_x^y$ is the total sum of the internal dependency lengths of constituent $x$ in word order $y$. 

$L_x^y$ and $R_x^y$ are defined, respectively, as the number of words to the left and to the right of the head word of constituent $x$ (e.g., $x \in \{$S, V, O$\}$) in word order $y$, and thus $|x| = L_x^y + 1 + R_x^y$.
Assuming continuity, the following holds \cite{Ferrer2008e}:
\begin{equation}
\delta^{SOV} = 2L_V^{SOV} + 2R_O^{SOV} + L_O^{SOV} + R_S^{SOV} + 3
\label{sum_of_dependency_lengths_SOV_equation}
\end{equation}
and
\begin{equation}
\delta^{SVO} = R_S^{SVO} + |V| + L_O^{SVO} + 1.
\label{sum_of_dependency_lengths_SVO_equation}
\end{equation}
For simplicity, the original mathematical theory \cite{Ferrer2008e} focused on the problem of the optimal placement of dependents of nominal heads to minimize the sum of lengths of dependencies defined by the top-level constituents (external dependencies), i.e. $\delta^y$, thus neglecting the cost of dependencies formed by words within a given constituent, i.e. $\omega_S^{y}$, $\omega_V^{y}$ and $\omega_O^{y}$. As will be shown next, this is equivalent to assuming conservation for the lengths implied by other dependencies (internal dependencies) when varying the relative ordering of those top-level constituents.

$\delta^{y,left}$ and $\delta^{y,right}$ are defined as the value of $\delta^y$ implied by placing all the dependents of the nominal head before or after the noun, respectively.
$\omega_x^{{y},left}$ and $\omega_x^{{y},right}$ are defined as the total sum of the internal dependencies of constituent $x$ in word order $y$ when dependents of nominal heads precede or follow the head, respectively. $\Omega^{y,left}$ and $\Omega^{y,right}$ are defined similarly for the total sum. Thus we have 
\begin{eqnarray}
\Omega^{y,left} = \omega_S^{y,left} + \omega_V^{y,left} + \omega_O^{y,left} + \delta^{y,left}, \\
\Omega^{y,right} = \omega_S^{y,right} + \omega_V^{y,right} + \omega_O^{y,right} + \delta^{y,right}.
\end{eqnarray}
The conservation of the total sum of internal dependency lengths when varying the relative ordering of the dependents of nominal heads means that
\begin{equation}
\omega_S^{y,left} + \omega_V^{y,left} + \omega_O^{y,left} = \omega_S^{y,right} + \omega_V^{y,right} + \omega_O^{y,right} \label{conservation1_equation}
\end{equation}
or equivalently 
\begin{equation}
\Omega^{y,left} - \Omega^{y,right} = \delta^{y,left} - \delta^{y,right}, \label{conservation2_equation}
\end{equation}
i.e. conservation means that the difference in total cost between different relative placements depends only on the sum of dependency lengths of top-level constituents. 
Investigating the extent to which this conservation is valid for human language and the consequences of violations of such conservation for our theoretical arguments is left for future research. 

Next, the sum of dependency lengths as a function of the relative placement of dependents of nominal heads for a given word order will be investigated. Notice that the condition $\Omega^{y,left} \leq  \Omega^{y,right}$ is equivalent to $\delta^{y,left} \leq \delta^{y,right}$ under conservation (Eq. \ref{conservation2_equation}).
Let us consider SOV first. 
For placements of dependents before their nominal head, one has $R_O^{SOV, left} = 0$, $L_O^{SOV, left} = |O| - 1$ and $R_S^{SOV,left} = 0$, which transforms Eq. \ref{sum_of_dependency_lengths_SOV_equation} into 
\begin{equation}
\delta^{SOV, left} = 2L_V^{SOV,left} + |O| + 2.
\label{sum_of_dependency_lengths_SOV_before_equation}
\end{equation}
For dependents following their nominal head, one has $R_O^{SOV,right} = |O|-1$, $L_O^{SOV,right} = 0$ and $R_S^{SOV,right} = |S|-1$, which transforms Eq. \ref{sum_of_dependency_lengths_SOV_equation} into
\begin{equation}
\delta^{SOV, right} = 2L_V^{SOV,right} + 2|O| + |S|.
\label{sum_of_dependency_lengths_SOV_after_equation}
\end{equation}
Assuming $L_V^{SOV,left} = L_V^{SOV,right}$ (the relative ordering of dependents of nominal heads should not alter the relative ordering of dependents of verbal heads), Eqs. \ref{sum_of_dependency_lengths_SOV_before_equation} and \ref{sum_of_dependency_lengths_SOV_after_equation} transform the condition $\delta^{SOV,left} \leq \delta^{SOV,right}$ into 
\begin{equation}
|S| + |O| \geq 2
\label{condition_for_relative_placement_of_dependents_in_SOV_equation}
\end{equation}
with equality if and only if $|S| = |O| = 1$, as $|S|, |O| \geq 1$. Therefore, placing the dependents before their nominal head is always advantageous in terms of minimizing top-level dependency lengths for SOV. The case $|S| = |O| = 1$ does not pose a problem as it means that the nominal heads do not have dependents; but notice that $|S| > 1$ or $|O| > 1$ suffices for the optimality of placing the dependents of nominal heads before the heads of S and O.

Let us consider SVO now. For placements of dependents before their nominal head, one has $R_S^{SVO,left} = 0$ and $L_O^{SVO,left} = |O| - 1$, which transforms Eq. \ref{sum_of_dependency_lengths_SVO_equation} into 
\begin{equation}
\delta^{SVO, left} = |V| + |O|.
\label{sum_of_dependency_lengths_SVO_before_equation}
\end{equation}
For placements after the nominal head, one has $R_S^{SVO,right} = |S|-1$ and $L_O^{SVO,right} = 0$, which transforms Eq. \ref{sum_of_dependency_lengths_SVO_equation} into 
\begin{equation}
\delta^{SVO, right} = |V| + |S|.
\label{sum_of_dependency_lengths_SVO_after_equation}
\end{equation}
Eqs. \ref{sum_of_dependency_lengths_SVO_before_equation} and \ref{sum_of_dependency_lengths_SVO_after_equation} transform the condition $\delta^{SVO,left} \leq \delta^{SVO, right}$ into 
\begin{equation}
|O| \leq |S|,
\end{equation}
with equality if and only if $|O| = |S|$.
This result indicates that, in contrast to SOV, the placement of modifiers with regard to the nominal head is practically irrelevant for SVO. If one assumes that the object and the subject have the same length ($|O|=|S|$) then the total cost of SVO does not depend on the relative ordering of dependents of nominal heads ($\delta^{SVO,left} = \delta^{SVO, right}$). The requirement $|O|=|S|$ might seem too strict, but we have employed a simple mathematical approach. A more powerful mathematical argument is not based on the values of 
$\delta^{SVO,left}$ and $\delta^{SVO, right}$ but on their expectations, namely $E[\delta^{SVO,left}]$ and $E[\delta^{SVO, right}]$. In that case, following similar arguments, one arrives at the conclusion that a tie between relative placements in SVO occurs when $E[\delta^{SVO,left}] = E[\delta^{SVO, right}]$, which is equivalent to $E[|O|]=E[|S|]$. This implies that 
identical lengths, on average, of O and S suffice for the relative ordering of dependents of nominal heads to be irrelevant. An approach based on expectations will be elaborated with more detail in the future. 

Next, a mathematical argument on the cost of regression from SVO to SOV depending on the relative placement of dependents of nominal heads will be developed. 
By definition, the cost of an SVO order that places those dependents before their nominal head is 
\begin{equation} 
\Omega_{SVO} = \omega_S^{SVO,left} + \omega_V^{SVO,left} + \omega_O^{SVO,left} + \delta^{SVO,left}
\label{SVO_cost_left_equation}
\end{equation}
while that of an SVO order placing them after the head is 
\begin{equation}
\Omega_{SVO} = \omega_S^{SVO,right} + \omega_V^{SVO,right} + \omega_O^{SVO,right} + \delta^{SVO,right}.
\end{equation}
Imagine that one of those SVO orders is transformed into SOV simply by reordering the top-level constituents (keeping their internal organization constant so that $\omega_S^{SVO,left}$, $\omega_V^{SVO,left}$, $\omega_O^{SVO,left}$ are not altered; this could be simply due to least effort). Eq. \ref{SVO_cost_left_equation} implies that the cost of the SOV order placing those dependents before their head is
\begin{equation}
\Omega_{SOV}' = \omega_S^{SVO,left} + \omega_V^{SVO,left} + \omega_O^{SVO,left} + \delta^{SOV,left}
\end{equation}
while that of the SOV order placing them after the head is 
\begin{equation}
\Omega_{SOV}'' = \omega_S^{SVO,right} + \omega_V^{SVO,right} + \omega_O^{SVO,right} + \delta^{SOV,right}.
\end{equation}

Thus, the condition $\Omega_{SOV}' < \Omega_{SOV}''$ reduces to $\delta^{SOV,left} < \delta^{SOV,right}$, which has been proven above to require only that $S$, $O$ or both have their own dependents ($|S|>1$ or $|O|>1$; recall Eq. \ref{condition_for_relative_placement_of_dependents_in_SOV_equation}). Regression to SOV from SVO is therefore more expensive from the perspective of SOV when dependents of nominal heads follow their head in SVO.

One could argue that we are oversimplifying the problem of regression from SVO to SOV when considering the case of a reordering of constituents but not an internal reordering of the constituents. However, internal reordering would be less likely because it would increase the cost of regression to SOV (speakers and hearers may find the larger number of word order changes harder to produce or understand). Generalizing the analysis of this appendix to include the case $g(d) \neq d$ is left for future research.

\pagebreak

\begin{table}
\tablecaption{The frequency of the placement of the verb in the ordering of subject (S), verb (V) and object (O) in languages showing a dominant word order. There are three possible placements: 1 for verb-initial orderings (VSO and VOS), 2 for central verb placements (SVO and its reverse) and 3 for verb-final orderings (SOV and OSV). Absolute frequencies are borrowed from \protect \citeA{Dryer2011a}.}
{\footnotesize
\begin{tabular}{@{}lrr}
Verb placement & Frequency (in languages) & Percentage \\
\hline
1 &  120 & 10.1 \\
2 &  499 & 42.0 \\
3 &  569 & 47.9 \\
Total & 1188 & \\
\end{tabular}
\label{word_order_statistics_table}
}
\end{table}

\pagebreak

\begin{figure}[!t]
\centering
\includegraphics[scale = 0.7]{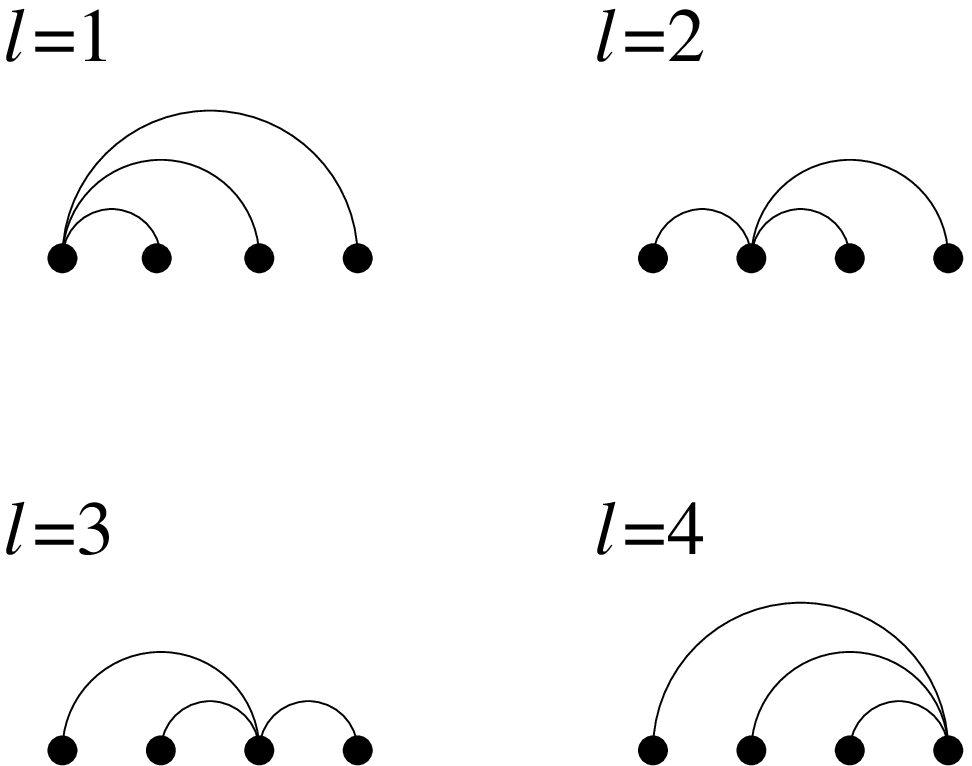}
\caption{All four possible placements of a head with $n=3$ dependents. Black filled circles are elements, and edges indicate syntactic dependencies. $l$ indicates the position of the head in the sequence of elements. It can be seen that the linear arrangements of the top row are symmetric to those of the bottom row.}
\label{linear_arrangement_figure}
\end{figure}

\pagebreak

\begin{figure}[!t]
\centering
\includegraphics[scale = 0.6]{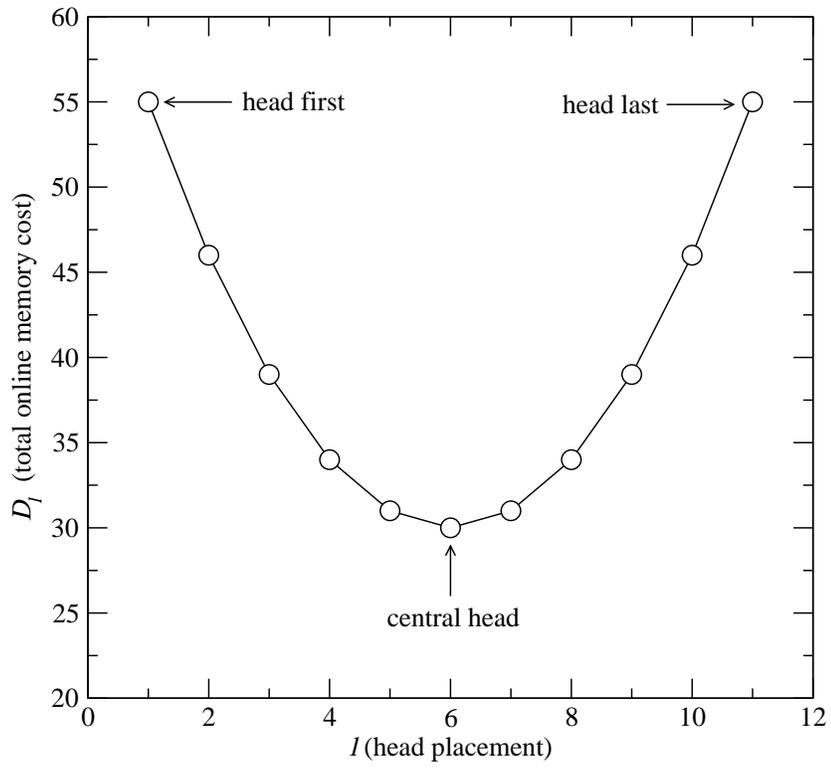}
\vspace{1cm} 
\caption{$D_l$, the total online memory cost of placing a head and $n=10$ dependents as a function of $l$, the position of the head. Eq. \ref{total_cost_identity_equation} is used to compute $D_l$. The placement of the head in first or last position yields maximum cost while the cost is minimum with the head at the center. }
\label{energy_landscape_figure}
\end{figure}

\pagebreak

\begin{figure}[!t]
\centering
\includegraphics[scale = 0.6]{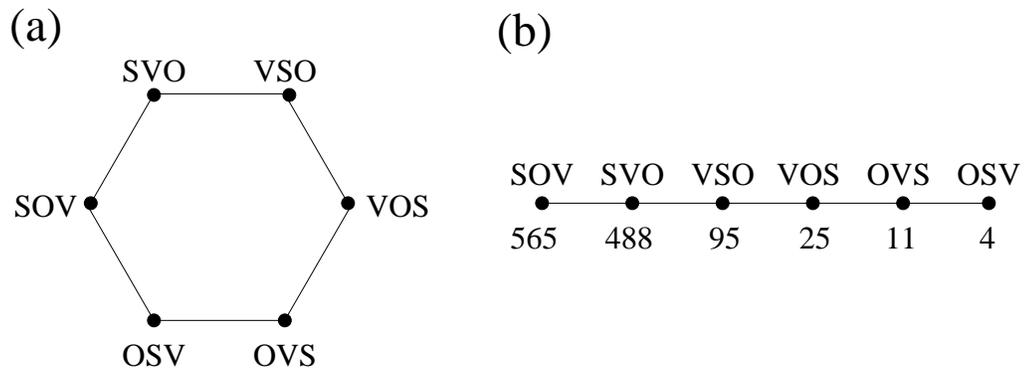}
\vspace{1cm} 
\caption{(a) Network of the possible trajectories between the six possible orderings of S, V and O. Two orders are connected if one can become the other by swapping a pair of adjacent constituents (adapted from \protect \citeA{Ferrer2008e}). The network shows a ring structure. (b) The linear network of trajectories from an initial SOV order to other orders that results from swapping adjacent constituents, following the permutation ring in a clockwise direction. Numbers indicate the number of languages having a certain word order as dominant (numbers borrowed from \protect \citeA{Dryer2011a}). }
\label{permutation_ring_figure}
\end{figure}

\pagebreak

\begin{figure}[!t]
\centering
\includegraphics[scale = 0.6]{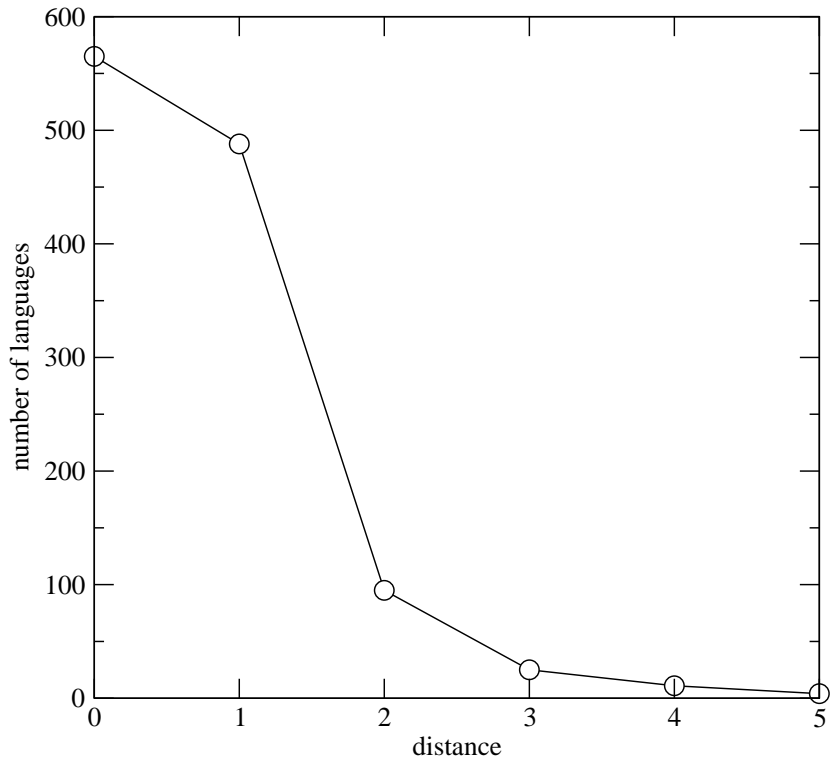}
\vspace{1cm} 
\caption{The number of languages having a certain dominant ordering of S, V and O as a function of their distance (in edges) to SOV, moving clockwise in the permutation ring of Fig. \ref{permutation_ring_figure}.}
\label{languages_versus_distance_figure}
\end{figure}

\end{document}